\documentclass{article}

    \PassOptionsToPackage{numbers, compress}{natbib}
 \usepackage[preprint]{neurips_2026}

\usepackage[utf8]{inputenc} 
\usepackage[T1]{fontenc}    
\usepackage{hyperref}       
\usepackage{url}            
\usepackage{booktabs}       
\usepackage{amsfonts}       
\usepackage{nicefrac}       
\usepackage{microtype}      
\usepackage{xcolor}         

\usepackage{amsmath}
\usepackage{amsthm}
\usepackage{algorithm}
\usepackage{algpseudocode}
\usepackage{graphicx}
\usepackage{multirow}
\usepackage{wrapfig}
\usepackage{caption}

\newtheorem{definition}{Definition}
\newtheorem{theorem}{Theorem}

\newtheorem{proposition}{Proposition}

\newtheorem{remark}{Remark}

\title{AegisFlow: Training-Free Non-Myopic Path-Safe Guided Flow Matching}

\author{%
  Kunpeng Liu\\
  Department of Automation and BNRist\\
  Tsinghua University\\
  Beijing, China \\
  \texttt{liukp22@mails.tsinghua.edu.cn}\\
  \And
  Boshi Zhang\\
  Department of Automation and BNRist\\
  Tsinghua University\\
  Beijing, China \\
  \texttt{zbs22@mails.tsinghua.edu.cn}\\
  \And
  Keyou You\thanks{Corresponding author}\\
  Department of Automation and BNRist\\
  Tsinghua University\\
  Beijing, China \\
  \texttt{youky@tsinghua.edu.cn}\\
}

\begin{document}

\maketitle

\begin{abstract}
  Pretrained generative models, such as diffusion and flow matching models, have demonstrated exceptional capabilities in synthesizing high-quality data. However, steering their generative trajectories to satisfy strict constraints—such as identity preservation in image editing or obstacle avoidance in robotics—remains fundamentally challenging. While recent training-free guidance methods enforce path-safety, they rely heavily on min-norm controllers that only account for immediate constraint satisfaction. These myopic approaches often yield suboptimal trajectories that hover dangerously close to the boundary, severely restricting the model's ability to optimize toward its target. In this paper, we propose AegisFlow, a training-free, non-myopic path-safe guidance framework that recasts guided generation as a safety-critical optimal control problem with global scope. By integrating a time-varying barrier function directly into the global cost functional, AegisFlow naturally enforces constraints along the entire generative path. To compute the global optimal control, we derive a tractable, mathematically grounded approximation of the Hamilton-Jacobi-Bellman (HJB) equation. Comprehensive experiments demonstrate that AegisFlow outperforms state-of-the-art methods across text-guided image manipulation and robotic planning, establishing a significantly better trade-off between target alignment and constraint satisfaction. Code is available at \url{https://github.com/LiuKunpeng03/AegisFlow}.
\end{abstract}

\section{Introduction}

\begin{figure}
  \centering
  \includegraphics[width=0.85\linewidth]{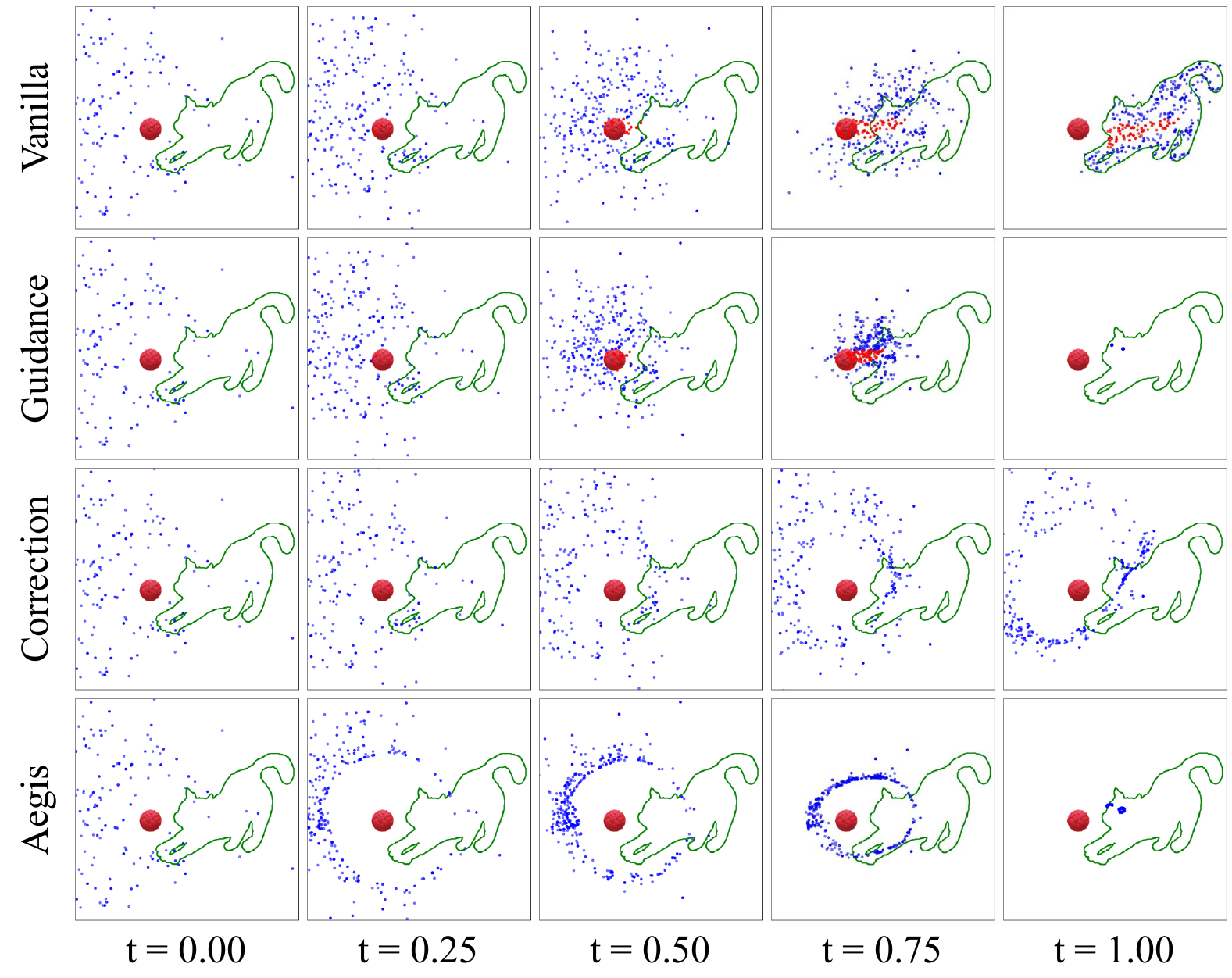}
  \caption{\textbf{Toy Example.} \textbf{Vanilla} represents the original flow which is trained to fit into the contour of a cat. \textbf{Guidance} stands for the case where the flow is guided towards two eyes of the cat. Although the guidance successfully steers the trajectory, many trajectories collide with the red yarn ball which serves as the obstacle. \textbf{Correction} means min-norm controller is applied to the former case. While all trajectories are safe, the quality is significantly degraded. \textbf{Aegis} successfully guides most of the trajectories towards the target while maintaining a safe distance from the obstacle. }\label{fig:toy_example}
\end{figure}

Generative models such as diffusion~\citep{ho2020denoising,song2020score} and flow matching~\citep{lipman2022flow} have exhibited remarkable capabilities in synthesizing high-quality data across various domains, including images~\citep{rombach2022high,esser2024scaling} and planning~\citep{carvalho2023motion,ding2025fast}. Guided generation, e.g., image editing~\citep{meng2021sdedit,hertz2023delta}, takes advantage of the expressiveness of pretrained generative models and steers the generative trajectory towards a target. Reward-guided methods~\citep{song2023loss,liu2023flowgrad,wang2024training}, which leverage a predefined reward function to optimize the trajectory at inference time, have attracted significant attention due to their \emph{training-free} nature and flexibility in handling various objectives. 

Besides objective guidance, constraint satisfaction is increasingly accounted for in guided generation. For example, fake or harmful content should be strictly prohibited from generation~\citep{schramowski2023safe,li2025responsible}, and generated trajectories for robotics should obey physical law and avoid obstacles~\citep{janner2022planning,carvalho2023motion}. Previous methods deal with constraints by either soft guidance~\citep{yuan2023reward,chen2024simple} or projection~\citep{christopher2024constrained,utkarsh2025physics,zampini2025training}. The former admits no hard guarantees on constraint satisfaction, while the latter might hinder the generation quality. Recent works~\citep{xiao2023safediffuser,yang2025safeflowmatcher,gadginmath2026provably} extend the desired constraint on the output to the entire generative trajectory, and enforce \emph{path-safety} by imposing control to the pretrained flow model. These methods are able to achieve better constraint satisfaction while maintaining generation quality. However, they apply a \emph{myopic} min-norm controller, which only considers the immediate effect of the control input without accounting for the long-term effect on the trajectory. As a result, these methods often find suboptimal solutions that hover dangerously close to safety boundaries, severely limiting the aggressiveness with which a model can optimize toward its target. See Figure~\ref{fig:toy_example} for an illustration.

To overcome the fundamental limitations of myopic controllers, we propose AegisFlow, a \emph{training-free, non-myopic path-safe} guidance method for flow matching models. AegisFlow recasts guided generation as a safety-critical optimal control problem with a global scope. Instead of relying on per-step corrections, we integrate a time-varying barrier function directly into the global cost functional. This formulation penalizes the trajectory for approaching the boundary of the safe set, ensuring the entire generated path remains within a safe tube. AegisFlow seeks to find the global optimal control via estimating the solution to the Hamilton-Jacobi-Bellman (HJB) equation associated with the cost functional, which inherently accounts for the long-term effects of control inputs on both target alignment and safety constraints. Since the barrier function drives the cost to infinity near safety boundaries, any finite-cost solution natively guarantees constraint satisfaction, allowing us to bypass the ad-hoc projections or per-step barrier functions required by prior methods.

We extensively evaluate AegisFlow across diverse and challenging domains, spanning from high-dimensional text-guided image manipulation (CelebA-HQ) to physical planning (Maze2D and D3IL robotic manipulation). In image editing, AegisFlow proactively maintains a larger safety margin for identity preservation, enabling stronger textual alignment (higher CLIP scores) and better identity retention than myopic baselines. In continuous control tasks, AegisFlow achieves higher success rates, smoother trajectories, and fewer safety violations compared to current state-of-the-art safe diffusion methods.

In summary, our main contributions are as follows:

\begin{itemize}
\item \textbf{Overcoming of Myopic Limitations:} We identify the suboptimality of min-norm controllers in guided flow matching and propose AegisFlow, a globally aware, training-free optimal control framework.
\item \textbf{Tractable HJB Approximation:} We formulate a barrier-inclusive cost functional and derive a tractable, mathematically grounded approximation for the optimal safe control using the Hamilton-Jacobi-Bellman (HJB) equation.
\item \textbf{Comprehensive Empirical Validations:} We demonstrate superior performance across both high-dimensional image manipulation and safety-critical continuous control tasks, establishing a significantly better trade-off between target alignment and constraint satisfaction.
\end{itemize}

\section{Related Works}

\paragraph{Training-free image manipulation.} There are various training-free methods for image manipulation based on pretrained diffusion models. The mainstream approach starts with image-to-noise inversion and then applies certain guidance to the generative trajectory~\citep{liu2023flowgrad,samuel2023lightning,brack2024ledits++,huberman2024edit,ben2024d,wang2024training}. Some works directly optimize over the original image~\citep{koo2024posterior,nam2024contrastive,yang2024text}. Recently, \citet{kulikov2025flowedit} also explore inversion-free methods. In this paper, we will mainly focus on the first approach. However, we think there is no fundamental barrier to applying our method to the other approaches, and we leave it for future work.

\paragraph{Flow matching under control theory.} Prior works have formulated the training~\citep{domingo2024adjoint,zhao2025score,zhu2025unidb}, sampling~\citep{zhang2021path,berner2022optimal,vargas2023denoising,havens2025adjoint}, and test-time editing~\citep{liu2023flowgrad,wang2024training,rout2024semantic,rout2025rb,gao2026terminally} of flow matching models as optimal control problems. Some of them use Pontryagin's Maximum Principle (PMP)~\citep{domingo2024adjoint,liu2023flowgrad,wang2024training}, while others adopt the Hamilton-Jacobi-Bellman (HJB) approach~\citep{zhao2025score,rout2025rb,gao2026terminally}. They aim to optimize a certain cost, while the behavior of the trajectory is controlled by a regularizer and admits no hard guarantees. Several recent works explore the use of per-step projection~\citep{christopher2024constrained,romer2024diffusion,zampini2025training,utkarsh2025physics,luan2025projected}, control barrier functions (CBFs) with min-norm controller (MNC)~\citep{xiao2023safediffuser,zhang2025constrained,yang2025safeflowmatcher,gadginmath2026provably} to enforce hard constraints along the entire trajectory. However, the myopic nature of per-step projection and MNCs remains to be addressed. \citet{mizuta2024cobl} try to encode the CBF constraint into the reward function. But they do so by including the increment of the term inside the supremum in \eqref{eq:cbf_condition} (in Appendix~\ref{app:cbf}) between two timesteps as part of the reward, which is still essentially myopic and cannot find globally optimal solutions. \citet{mishra2025eb} explore the use of an emerging barrier function to enforce safety constraints in the diffusion process, but their method is mainly a heuristic one. In constrast, our method estimates the optimal control from the HJB equation with a cost functional that includes the barrier function, which attempts to find a globally optimal solution that minimizes the cost functional while ensuring the trajectory stays within a safe set.

\section{Preliminaries and Problem Formulation}

\paragraph{Flow Matching.} Flow matching~\citep{lipman2022flow, liu2022flow,albergo2025stochastic} formulates the diffusion models from a perspective of ordinary differential equations (ODEs). Given a data distribution $\pi_1$ and a noise distribution $\pi_0$, flow matching learns a time-dependent vector field $f(x, t)$ that transports $\pi_0$ to $\pi_1$ along the trajectories defined by the ODE:
\begin{equation}\label{eq:flow_matching_ode}
\dot{x} = f(x, t),\quad x(t_0)\sim\pi_0.
\end{equation}

Indeed, once a sample $x_0$ is drawn from $\pi_0$, a unique trajectory $x_t$ is determined by the ODE under necessary regularity conditions. Therefore $f$ induces a map $\Phi^f_{t_1\to t_2}$ such that $\Phi^f_{t_1\to t_2}(x(t_1)) = x(t_2)$, which further results in a pushforward measure $\Phi^f_{t_0\to t_f}(\pi_0)$. The flow matching objective is to learn a vector field $f$ such that $\Phi^f_{t_0\to t_f}(\pi_0) = \pi_1$. In practice, flow matching generally minimizes the regression objective:
\[\mathbb{E}_{t,x(t_0),x(t_f)}(\|f(x(t),t)-u_t(x(t) | x(t_0),x(t_f))\|_2^2),\]
where $u_t$ is the velocity field of the reference trajectory, which can be defined in various ways, e.g. linear interpolant~\citep{albergo2025stochastic}.

\begin{figure}
  \centering
  \includegraphics[width=1.0\linewidth]{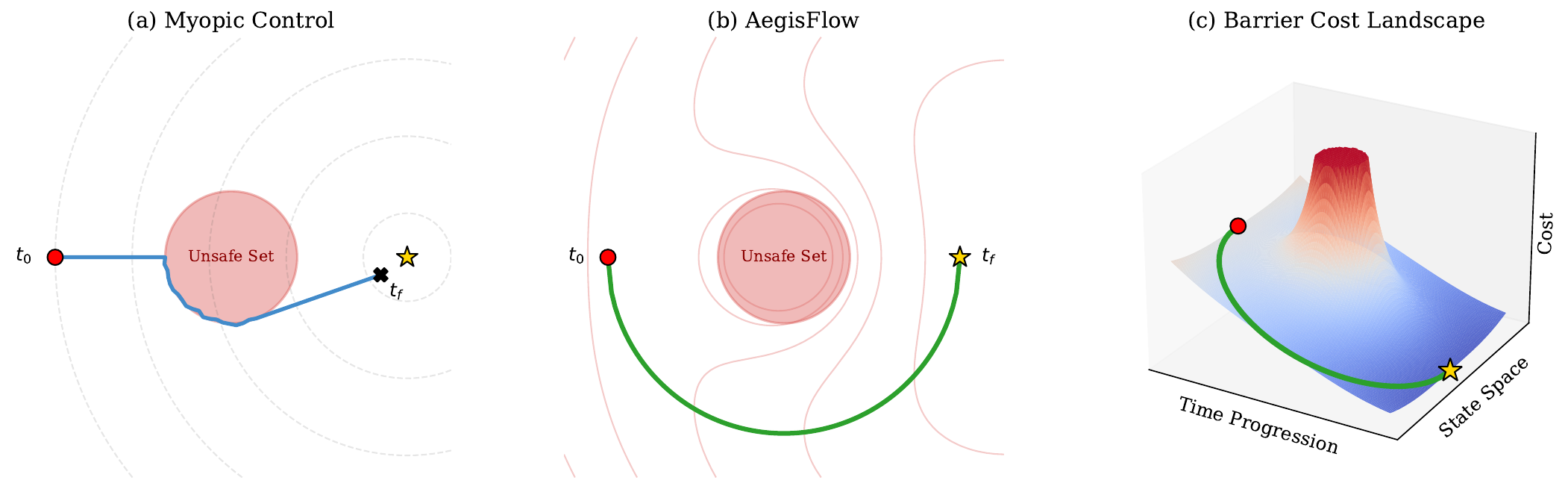}
  \caption{\textbf{Conceptual comparison between myopic control and AegisFlow.} \textbf{(a) Myopic Control}: A greedy controller driven purely by local target attraction and rigid boundary constraints. It gets trapped sliding jaggedly along the edge of the Unsafe Set and fails to reach the target state. \textbf{(b) AegisFlow}: By explicitly modeling the global cost field, the proposed trajectory is smooth and globally optimal. It proactively avoids the obstacle with an anticipated safety margin and successfully converges to the target. \textbf{(c) Barrier Cost Landscape}: The cost functional used by AegisFlow combines a barrier function to enforce constraints and a reward of the target, balancing constraint satisfaction and target guidance.}\label{fig:concept}
\end{figure}

\paragraph{Guided Generation and Optimal Control.} 

The primary goal of guided generation is to steer the generative trajectory towards a target while satisfying some constraints, e.g. identity preservation or obstacle avoidance. In the scope of control theory, the problem can be formulated as follows. Given a pretrained flow matching model with vector field $f$, we consider the following controlled dynamics:
\begin{equation}\label{eq:controlled_dynamics}
\dot{x} = f(x,t) + u,
\end{equation}
where $u$ is the control input to be chosen. Reward-guided methods prove to be effective in steering the flow towards the target. Specifically, they select the control input $u$ that minimizes a certain cost functional,
\begin{equation}\label{eq:cost_functional}
J(u):=\int_{t_0}^{t_f}L(x(t), u(t), t)\mathrm{d}t + \Psi(x(t_f)),\quad \text{s.t.} \quad x(t_0)=x_0,
\end{equation}
where $L$ is the running cost and $\Psi$ is the terminal cost. For example, \citet{wang2024training} set $L(x, u, t)=\|u\|_2^2$ and $\Psi(\cdot)=-\operatorname{CLIP}(\cdot, \text{prompt})$ in \eqref{eq:cost_functional} in the text-guided image manipulation task to encourage prompt adherence.

Finding the optimal control $u^\star$ that minimizes the cost functional \eqref{eq:cost_functional} is a classical problem in control theory, which can be solved by the Hamilton-Jacobi-Bellman (HJB) equation. To begin with, let us make the dependence of the cost functional $J$ on the initial state $x_0$ and time $t_0$ explicit, namely
\begin{equation}\label{eq:cost_functional_flow_matching}
J(u, x_0, t_0):=\int_{t_0}^{t_f}L(x(t), u(t), t)\mathrm{d}t + \Psi(x(t_f)),\quad \text{s.t.} \quad x(t_0)=x_0.
\end{equation}
Denote the value function of the optimal control problem as
\begin{equation}\label{eq:value_function}
V(x, t):=\min_{u} J(u, x, t).
\end{equation}
HJB states that under necessary regularity conditions, $V$ satisfies the following PDE:
\begin{equation}\label{eq:hjb}
-\partial_t V(x, t) = \min_{u} [L(x, u, t) + \nabla_x V(x, t)\cdot (f(x, t)+u)],\quad \text{s.t.} \quad V(x, t_f)=\Psi(x),
\end{equation}
and the optimal control $u^\star$ satisfies
\begin{equation}\label{eq:optimal_control_hjb}
u^\star(x, t) \in \arg\min_{u} [L(x, u, t) + \nabla_x V(x, t)\cdot (f(x, t)+u)].
\end{equation}
Conversely, if a function $\hat{V}$ satisfies \eqref{eq:hjb} and a control $\hat{u}$ satisfies \eqref{eq:optimal_control_hjb} with $V$ replaced by $\hat{V}$, then it must be the case that $\hat{V}$ is the value function and $\hat{u}$ is an optimal control.

Constraint satisfaction is also increasingly considered in guided generation. It requires the generated sample to satisfy certain constraints, i.e., $x(t_f)\in \mathcal{C}$ for some constraint set $\mathcal{C}$. Traditional approaches utilize soft guidance or projection to enforce constraints, which offer no hard guarantees or suffer from performance degradation. For example, \citet{gao2026terminally} set $\Psi$ to be the distance to the admissible set and utilize HJB to estimate the optimal control. To alleviate performance deterioration caused by post hoc projection, recent works extend the constraints to the entire trajectory~\citep{xiao2023safediffuser,yang2025safeflowmatcher,gadginmath2026provably}, i.e., they demand $x(t)\in \mathcal{C}_t$ for all $t\in[t_0,t_f]$ where $\mathcal{C}_t$ is a time-varying safe set with $\mathcal{C}_{t_f}\subseteq\mathcal{C}$. They enforce such \emph{path-safety} by CBFs and MNCs. However, MNCs only consider the immediate effect of the control input and such myopic nature often leads to suboptimal solutions. Details about CBFs and MNCs are deferred to Appendix~\ref{app:cbf}.

In summary, the core problem we seek to address is as follows: \emph{How can we find the optimal control that minimizes the cost functional while satisfying the given constraints?} Formally speaking, we aim to solve the following optimization problem:
\begin{equation}\label{eq:optimal_control_problem}
\begin{aligned}
\min_u& \qquad J(u, x_0, t_0):=\int_{t_0}^{t_f}L(x(t), u(t), t)\mathrm{d}t + \Psi(x(t_f)),\\
\text{s.t.}& \qquad \dot{x} = f(x,t) + u, \quad x(t_0)=x_0, \quad x(t)\in \mathcal{C}_t, \quad \forall t\in[t_0,t_f].
\end{aligned}
\end{equation}

\section{Methodology}

\begin{algorithm}[t]
\caption{AegisFlow}
\label{alg:aegisflow}
\small
\begin{algorithmic}[1]
\State \textbf{Input:} ${x}_{t_0}\sim\rho_0$, schedules $\{\lambda_t,\beta_t\}$, barrier function $B$, reference flow $f$, terminal loss $\Psi$, timestep grid $\{t_0,\dots,t_N\}$, maximum iteration $M$, learning rate $\eta$, weight decay $\nu$
\For{$k=0,\dots,M-1$}
  \State Solve the controlled dynamics \eqref{eq:controlled_dynamics} with the current control $u^k_n$ to obtain the trajectory $\{x_n=x(t_n)\}_{n=0,1,\dots,N-1}$
  \State $s_n\gets\int_{t_n}^1\lambda_\tau^{-1}\mathrm{d}\tau$
  \State $g_{\mathrm{safe},n}\gets \nabla_{{x}}B({x}_n,t_n)$
  \State ${g}_{\mathrm{tar},n}\gets \nabla_{{x}}(\Psi\circ\Phi^{f+u^k}_{t_n\to1})({x}_n)$
  \State $g_n\gets \lambda_{t_n}^{-1}{g}_{\mathrm{tar},n}+s_n\beta_{t_n}g_{\mathrm{safe},n}$
  \State $u^{k+1}_n\gets \nu u^k_n-\eta g_n$
\EndFor
\State \textbf{return} ${x}_{t_N}$
\end{algorithmic}
\end{algorithm}

\subsection{AegisFlow: Non-Myopic Path-Safe Guided Flow Matching}

We have recast guided generation as a safety-critical optimal control problem \eqref{eq:optimal_control_problem}. To enhance the long-term awareness of the safe control, we consider the following cost functional:
\begin{equation}\label{eq:cost_functional_aegis}
J(u, x_0, t_0):=\int_{t_0}^{t_f}\Big[\frac{\lambda_t}{2}\|u(t)\|_2^2+\beta_t B(x(t),t)\Big]\mathrm{d}t + \Psi(x(t_f)),\quad \text{s.t.} \quad x(t_0)=x_0,
\end{equation}
where $B(\cdot,t):\operatorname{Int}(\mathcal{C}_t)\to\mathbb{R}_{\ge0}$ is a barrier function that satisfies
\[\lim_{x\to\partial \mathcal{C}_t} B(x,t) = +\infty, \quad \forall t\in[t_0,t_f].\]

Note that the safety constraints are implicitly enforced by the barrier function in the cost functional, which allows us to directly apply the HJB approach to estimate the optimal control. Intuitively, the barrier function $B$ penalizes the trajectory for getting too close to the boundary of the safe set, which encourages the trajectory to stay within $\mathcal{C}_t$. See Figure~\ref{fig:concept} for an illustration. Indeed, if the cost functional \eqref{eq:cost_functional_aegis} is finite, then the trajectory must satisfy $x(t)\in\operatorname{Int}(\mathcal{C}_t)$ for all $t\in[t_0,t_f]$. By adjusting the weight $\beta_t$, we can control the trade-off between editability and safety and $\lambda_t$ can be used to control the smoothness of the control input. Path-safety is formally justified in Proposition~\ref{prop:forward_invariance} below. Such cost functional is considered in~\citep{cohen2020approximate}, where kernel method is used to approximate the value function and optimal control. In contrast, we seek to directly estimate the optimal control from the HJB equation \eqref{eq:hjb} with the cost functional \eqref{eq:cost_functional_aegis}.

Recall that the value function $V$ is defined as \eqref{eq:value_function} with $J$ given by \eqref{eq:cost_functional_aegis}. We are able to derive the more explicit form of the HJB equation.

\begin{proposition}\label{prop:hjb_aegis}
Assume $V$ is smooth and $\lambda_t>0$. Then $V$ satisfies
\begin{equation}
-\partial_t V - \nabla_x V\cdot f + \frac{1}{2\lambda_t}\|\nabla_x V\|_2^2 - \beta_t B(x,t)=0,
\quad V(x,t_f)=\Psi(x),
\label{eq:hjb_forced}
\end{equation}
and the optimal control is given by
\begin{equation}
u^\star(x,t)= -\lambda_t^{-1}\nabla_{x}V(x,t).
\label{eq:feedback_forced}
\end{equation}
\end{proposition}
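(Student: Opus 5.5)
The plan is to specialize the general HJB equation \eqref{eq:hjb} to the running cost $L(x,u,t)=\frac{\lambda_t}{2}\|u\|_2^2+\beta_t B(x,t)$ of \eqref{eq:cost_functional_aegis}, and then solve the inner minimization over $u$ in closed form. Since we assume $V$ is smooth, \eqref{eq:hjb} and \eqref{eq:optimal_control_hjb} apply directly, with terminal condition $V(x,t_f)=\Psi(x)$. The state-dependent terms $\beta_t B(x,t)$ and $\nabla_x V\cdot f$ do not involve $u$, so they pull out of the minimum. This leaves
\[
-\partial_t V=\beta_t B(x,t)+\nabla_x V\cdot f+\min_u\Big[\tfrac{\lambda_t}{2}\|u\|_2^2+\nabla_x V\cdot u\Big].
\]

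The key step is the inner minimization. Because $\lambda_t>0$, the map $u\mapsto \frac{\lambda_t}{2}\|u\|_2^2+\nabla_x V\cdot u$ is strictly convex and coercive, so it has a unique minimizer. Setting the gradient $\lambda_t u+\nabla_x V$ to zero gives $u^\star=-\lambda_t^{-1}\nabla_x V$, which is \eqref{eq:feedback_forced}. Equivalently, completing the square,
\[
\tfrac{\lambda_t}{2}\|u\|_2^2+\nabla_x V\cdot u=\tfrac{\lambda_t}{2}\big\|u+\lambda_t^{-1}\nabla_x V\big\|_2^2-\tfrac{1}{2\lambda_t}\|\nabla_x V\|_2^2,
\]
so the minimum value is $-\frac{1}{2\lambda_t}\|\nabla_x V\|_2^2$.

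Substituting this value back and moving every term to one side yields
\[
-\partial_t V-\nabla_x V\cdot f+\tfrac{1}{2\lambda_t}\|\nabla_x V\|_2^2-\beta_t B=0,
\]
which is \eqref{eq:hjb_forced}.

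The only real subtlety is that $B$, and hence $V$, is defined only on the safe region $\{(x,t): x\in\operatorname{Int}(\mathcal{C}_t)\}$, where the cost is finite. The HJB derivation should therefore be read on this open domain, with smoothness of $V$ assumed there; the dynamic programming argument behind \eqref{eq:hjb} is local and goes through unchanged. Outside the safe region $V=+\infty$, and that region is never visited by finite-cost trajectories. I would state this restriction explicitly and otherwise rely on the assumed regularity rather than proving well-posedness.
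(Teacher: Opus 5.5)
Your proposal is correct and takes the same route as the paper: substitute the running cost $\frac{\lambda_t}{2}\|u\|_2^2+\beta_t B$ into the general HJB equation, minimize the strictly convex quadratic in $u$ to get $u^\star=-\lambda_t^{-1}\nabla_x V$ with minimum value $-\frac{1}{2\lambda_t}\|\nabla_x V\|_2^2$, and rearrange. Your remark that the equation should be read on the open safe region $\{x\in\operatorname{Int}(\mathcal{C}_t)\}$ is a sensible clarification that the paper leaves implicit.
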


\begin{proof}
See Appendix~\ref{app:proof_hjb_aegis}.
\end{proof}

\begin{proposition}\label{prop:forward_invariance}
Assume $V$ is smooth, $\mathcal C_t$ varies not too rapidly, $B(x,t)$ blows up fast enough as $x\to\partial \mathcal{C}_t$, and $u^\star$ is given by \eqref{eq:feedback_forced}. Suppose $x(t_0)\in\operatorname{Int}(\mathcal{C}_{t_0})$, then the trajectory of the controlled system \eqref{eq:controlled_dynamics} under the control $u^\star$ satisfies $x(t)\in\operatorname{Int}(\mathcal{C}_t)$ for all $t\in[t_0,t_f]$.
\end{proposition}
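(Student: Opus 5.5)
The plan is to use the value function itself as a Lyapunov-type certificate along the closed-loop trajectory and exploit the fact that $V$ must blow up near $\partial\mathcal C_t$. I will argue by contradiction: suppose the trajectory under $u^\star=-\lambda_t^{-1}\nabla_x V$ starting at $x(t_0)\in\operatorname{Int}(\mathcal C_{t_0})$ exits. Let $\tau=\inf\{t: x(t)\notin\operatorname{Int}(\mathcal C_t)\}\le t_f$. Then $x(t)\in\operatorname{Int}(\mathcal C_t)$ on $[t_0,\tau)$ and $x(t)\to\partial\mathcal C_\tau$ as $t\uparrow\tau$. By continuity of the trajectory and the assumption that $\mathcal C_t$ varies not too rapidly, this means $\operatorname{dist}(x(t),\partial\mathcal C_t)\to0$.

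The key computation is the total derivative of $V$ along the closed loop on $[t_0,\tau)$:
\[\frac{\mathrm d}{\mathrm dt}V(x(t),t)=\partial_tV+\nabla_xV\cdot f-\lambda_t^{-1}\|\nabla_xV\|_2^2.\]
Substituting the HJB equation \eqref{eq:hjb_forced} from Proposition~\ref{prop:hjb_aegis}, namely $\partial_t V+\nabla_xV\cdot f=\frac{1}{2\lambda_t}\|\nabla_xV\|_2^2-\beta_tB$, gives
\[\frac{\mathrm d}{\mathrm dt}V(x(t),t)=-\frac{1}{2\lambda_t}\|\nabla_xV\|_2^2-\beta_tB(x(t),t)\le0.\]
Hence $V(x(t),t)\le V(x_0,t_0)<\infty$ for all $t<\tau$. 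Integrating also yields
\[\int_{t_0}^{t}\beta_sB(x(s),s)\,\mathrm ds\le V(x_0,t_0)-V(x(t),t),\]
which is the usual verification identity: the running cost actually incurred by $u^\star$ is bounded by $V(x_0,t_0)$, provided $V$ is bounded below. Boundedness below holds if $\Psi$ is bounded below, since $B\ge0$.

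The contradiction comes from showing $V(x(t),t)\to+\infty$ as $t\uparrow\tau$. This is where the two qualitative hypotheses are used. From any state $y$ at distance $\delta$ from $\partial\mathcal C_s$, every control with finite $\int\lambda\|u\|^2$ either pays a large control cost to move away quickly, or stays within $O(\delta)$ of the boundary for a time interval. On that interval the boundary itself moves only slowly, which is where "$\mathcal C_t$ varies not too rapidly" enters, for instance through a Lipschitz-in-time boundary. The time spent near the boundary, multiplied by $B\gtrsim\phi(\delta)$, forces a large barrier cost. "Blows up fast enough" means exactly that $\phi$ grows quickly enough, for example non-integrably like $\delta^{-2}$, so that the minimum of these two costs tends to infinity as $\delta\to0$. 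It follows that $V(y,s)\to\infty$ as $\operatorname{dist}(y,\partial\mathcal C_s)\to0$, uniformly in $s$ near $\tau$. This contradicts $V(x(t),t)\le V(x_0,t_0)$, so $\tau$ cannot occur before $t_f$, and the case $\tau=t_f$ at the endpoint is excluded the same way.

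I expect this lower bound $V\to\infty$ at the boundary to be the main obstacle. The plan is to make it quantitative by a Cauchy–Schwarz estimate: a displacement $\Delta$ over time $h$ with bounded drift $f$ costs at least $\sim\lambda(\Delta-Mh)^2/h$ in control. Balancing this against $h\,\phi(\delta)$ gives a bound like $\min_h\{\lambda\delta^2/h+h\phi(2\delta)\}\sim\delta\sqrt{\lambda\phi(2\delta)}$, which diverges exactly when $\phi(\delta)\gg\delta^{-2}$. I would state this growth rate as the precise meaning of "fast enough."
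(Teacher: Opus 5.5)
Your skeleton matches the paper's proof: take the first exit time, show $V(x(t),t)$ is nonincreasing along the closed loop using \eqref{eq:hjb_forced} and $u^\star=-\lambda_t^{-1}\nabla_xV$, and contradict a blow-up of $V$. Your escape-versus-stay lower bound on $V$ works when $\tau<t_f$. The gap is the endpoint: $\tau=t_f$ is \emph{not} excluded ``the same way.''

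\begin{itemize}
\item The stay branch of your bound is capped by the remaining horizon. It gives only $\beta(t_f-s)\phi(2\delta)$, which is small when $t_f-s\ll\delta^2$.
\item Indeed $V(\cdot,t_f)=\Psi$ stays finite up to $\partial\mathcal C_{t_f}$. So $V(y,s)$ cannot blow up at the boundary uniformly for $s$ near $t_f$.
\item Along a finite-cost trajectory, $V(x(t),t)$ simply tends to $\Psi(x(t_f))$.
\end{itemize}

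Monotonicity of $V$ alone therefore gives no contradiction for a trajectory that first touches the boundary at $t_f$. The paper's one-line assertion that $V(x(t),t)\to\infty$ has the same weakness there. What actually covers this case is the trajectory argument in the Remark following the paper's proof.

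The repair uses something you already derived. Your identity $\frac{\mathrm d}{\mathrm dt}V(x(t),t)=-\frac{1}{2\lambda_t}\|\nabla_xV\|_2^2-\beta_tB$ is sharper than the paper's inequality. Integrating it, with $\Psi$ bounded below, gives both $\int_{t_0}^{t_f}\beta_tB\,\mathrm dt<\infty$ and $u^\star\in L^2$; the paper's Remark asserts the latter without justification. With bounded $f$ and Lipschitz motion of the boundary, this yields $\dot d\in L^2$. You can then finish in either of two ways:

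\begin{itemize}
\item Argue as the paper does, directly on the trajectory. The assumption $B\ge c/h^2$ gives $d^{-1}\in L^2$, so $|\dot d|/d\in L^1$ by Cauchy--Schwarz, and $\log d$ cannot reach $-\infty$.
\item Stay in your framework. Insert $d(s)\le\sqrt{t_f-s}\,\|\dot d\|_{L^2[s,t_f]}$ into the stay branch; since $\|\dot d\|_{L^2[s,t_f]}\to0$, this forces $(t_f-s)\phi(2d(s))\to\infty$.
\end{itemize}

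The trajectory argument also works at the critical rate $\phi=c\,\delta^{-2}$. Your single-scale balancing $\delta\sqrt{\lambda\phi(2\delta)}$ loses a logarithm and so needs $\phi\gg\delta^{-2}$.
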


\begin{proof}
See Appendix~\ref{app:proof_forward_invariance}.
\end{proof}

\begin{remark}
Proposition~\ref{prop:forward_invariance} establishes the path-safety of the optimal control to the cost functional \eqref{eq:cost_functional_aegis} under certain regularities. It does not automatically guarantee the path-safety of the approximated control derived from the HJB approximation in the next section, and of the output of Algorithm~\ref{alg:aegisflow}. In practice, one may set a safe guard (e.g., an MNC) that intervenes when necessary to ensure path-safety.
\end{remark}

\begin{figure}
  \centering
  \includegraphics[width=1.0\linewidth]{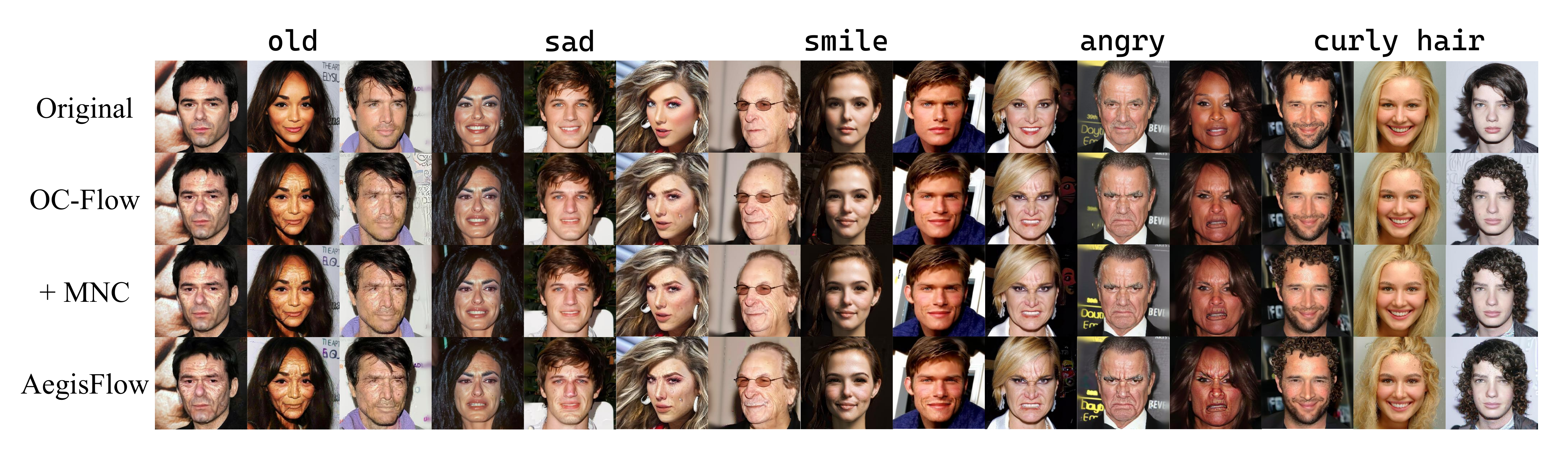}
  \caption{\textbf{Visualization of text-guided image manipulation on CelebA-HQ.} The row "+ MNC" shows the results when MNC is applied on OC-Flow.}\label{fig:celeba_hq}
\vspace{-1.5em}
\end{figure}

\subsection{Tractable HJB Approximation}

In light of Proposition~\ref{prop:hjb_aegis}, it suffices to calculate the gradient of the value function $\nabla_x V$ in order to obtain the optimal control. However, solving the HJB equation \eqref{eq:hjb_forced} is generally intractable. Instead, we seek to directly estimate the optimal control without explicitly solving for $V$. Let us start by characterizing $\nabla_x V$ further.

Following \citet{gao2026terminally}, we introduce the following change of variables:
\begin{equation}\label{eq:comoving_def}
\begin{aligned}
&s(t):=\int_t^{t_f} \lambda_\tau^{-1} \mathrm{d}\tau, \quad \widehat{V}(y,s(t)):=V(\Phi^f_{t_f\to t}(y),t), \quad y:=\Phi^f_{t\to t_f}(x),\\
&G_t(y):=\nabla_x\Phi^f_{t\to t_f}(\Phi^f_{t_f\to t}(y))\nabla_x\Phi^f_{t\to t_f}(\Phi^f_{t_f\to t}(y))^\top, \quad \Gamma_t(y):=\lambda_t\beta_t B(\Phi^f_{t_f\to t}(y),t),
\end{aligned}
\end{equation}
where $\Phi^f_{t_1\to t_2}$ is the flow map induced by the vector field $f$ as defined in the preliminaries. Then we have the following result.

\begin{proposition}
\label{prop:forced_hj_terminal}
Under Eq.~\eqref{eq:comoving_def}, Eq.~\eqref{eq:hjb_forced} transforms to a forced Hamilton-Jacobi equation
\begin{equation}
\partial_s\widehat{V}(y,s) + \frac{1}{2}\nabla_{y}\widehat{V}(y,s)^\top G_t(y)\nabla_{y}\widehat{V}(y,s)=\Gamma_t(y),
\quad \widehat{V}(y,0)=\Psi(y).
\label{eq:forced_hj}
\end{equation}
Moreover,
\begin{equation}
\nabla_{x}V(x,t)=\nabla_x\Phi^{f}_{t\to t_f}(x)^\top\nabla_{y}\widehat{V}(y,s(t)).
\label{eq:grad_pullback}
\end{equation}
\end{proposition}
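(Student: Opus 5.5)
The plan is a direct change-of-variables computation, starting from the forced HJB equation \eqref{eq:hjb_forced} of Proposition~\ref{prop:hjb_aegis}.

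\textbf{Step 1: rewrite the definition.} Write the definition in \eqref{eq:comoving_def} equivalently as
\[
V(x,t)=\widehat V\big(\Phi^f_{t\to t_f}(x),\,s(t)\big).
\]
Differentiating this identity in $x$ with the chain rule gives \eqref{eq:grad_pullback} immediately, since $y=\Phi^f_{t\to t_f}(x)$ depends on $x$ only through the flow map. Here $\nabla_x\Phi^f_{t\to t_f}(x)$ denotes the Jacobian, so its transpose multiplies the gradient $\nabla_y\widehat V$.

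\textbf{Step 2: time derivative.} Differentiate the same identity in $t$ at fixed $x$. This gives
\[
\partial_t V=\nabla_y\widehat V\cdot \partial_t\Phi^f_{t\to t_f}(x)+\partial_s\widehat V\,\dot s(t),
\qquad \dot s(t)=-\lambda_t^{-1}.
\]
The key fact is the transport identity for flow maps. Because $\Phi^f_{t\to t_f}$ is constant along trajectories of $\dot x=f(x,t)$, we have
\[
\tfrac{\mathrm d}{\mathrm dt}\Phi^f_{t\to t_f}(x(t))=0,
\quad\text{hence}\quad
\partial_t\Phi^f_{t\to t_f}(x)=-\nabla_x\Phi^f_{t\to t_f}(x)\,f(x,t).
\]
Substituting and using \eqref{eq:grad_pullback} yields
\[
\partial_t V=-\nabla_xV\cdot f-\lambda_t^{-1}\partial_s\widehat V.
\]
So the advection terms cancel: $-\partial_tV-\nabla_xV\cdot f=\lambda_t^{-1}\partial_s\widehat V$.

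\textbf{Step 3: substitute into \eqref{eq:hjb_forced}.} Using \eqref{eq:grad_pullback}, the quadratic term becomes
\[
\|\nabla_xV\|^2=\nabla_y\widehat V^\top \nabla_x\Phi\,\nabla_x\Phi^\top\nabla_y\widehat V=\nabla_y\widehat V^\top G_t(y)\nabla_y\widehat V,
\]
where the Jacobian is evaluated at $x=\Phi^f_{t_f\to t}(y)$, matching the definition of $G_t$. The barrier term is $\beta_tB(\Phi^f_{t_f\to t}(y),t)=\lambda_t^{-1}\Gamma_t(y)$. The transformed equation is therefore
\[
\lambda_t^{-1}\Big[\partial_s\widehat V+\tfrac12\nabla_y\widehat V^\top G_t\nabla_y\widehat V-\Gamma_t\Big]=0.
\]
Multiplying by $\lambda_t>0$ gives \eqref{eq:forced_hj}.

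\textbf{Step 4: terminal condition.} At $t=t_f$ we have $s(t_f)=0$ and $\Phi$ is the identity, so $\widehat V(y,0)=\Psi(y)$.

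\textbf{Expected difficulties.} The main subtlety is sign and orientation bookkeeping in Step 2: the flow-map time derivative with respect to its initial time, and the fact that $s$ decreases in $t$. I will fix these by checking them on the trivial case $f=0$.

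A second, milder point is that $G_t$ and $\Gamma_t$ are indexed by $t$ while the equation is posed in $s$. This is legitimate under the implicit assumption that $s(t)$ is strictly monotone, which holds since $\lambda_t>0$. Thus $t=t(s)$ is well-defined, and the smoothness assumed for $V$ and the flow maps justifies the chain rule throughout.
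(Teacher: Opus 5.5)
Your proposal is correct and follows essentially the same route as the paper: the chain rule gives \eqref{eq:grad_pullback}, the transport identity $\partial_t\Phi^f_{t\to t_f}(x)=-\nabla_x\Phi^f_{t\to t_f}(x)\,f(x,t)$ cancels the advection term, and substituting into \eqref{eq:hjb_forced} yields \eqref{eq:forced_hj}. You also spell out the terminal condition, the final multiplication by $\lambda_t>0$, and why $G_t$ and $\Gamma_t$ can be indexed by $t$ (namely that $s$ is strictly monotone), all of which the paper leaves implicit.
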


\begin{proof}
See Appendix~\ref{app:proof_forced_hj_terminal}.
\end{proof}

The expression \eqref{eq:forced_hj} is still difficult to solve exactly in high dimension. Suppose $\Gamma_t(y)\equiv\Gamma$ and $G_t\equiv G$ are constant, then \eqref{eq:forced_hj} admits an explicit solution given by the Hopf-Lax formula~\citep{evans2022partial}:
\[\widehat{V}(y,s) = s\Gamma + \inf_{z}\Big\{\Psi(z) + \frac{1}{2s}\|y-z\|_{G^{-1}}^2\Big\}.\]
Since PDE is locally well-approximated, we have
\[\widehat{V}(y,s) \approx s\Gamma_t(y) + \inf_{z}\Big\{\Psi(z) + \frac{1}{2s}\|y-z\|_{G_t^{-1}}^2\Big\}=:s\Gamma_t(y)+U.\]
If the infimum is attained at a unique point $y^+(y)$ for all $y$, then $U$ is differentiable and its gradient is given by the Moreau-Yosida identity~\citep{rockafellar1998variational}:
\[\nabla_y U = \frac1s G_t^{-1}(y-y^+(y)).\]
Consequently, by \eqref{eq:feedback_forced} and \eqref{eq:grad_pullback}, the optimal control can be approximated by
\begin{equation}
\begin{aligned}
u^\star(x,t) &\approx -\lambda_t^{-1}\nabla_x\Phi^{f}_{t\to t_f}(x)^\top\Big(s\nabla_y\Gamma_t(y) + \frac1s G_t^{-1}(y-y^+(y))\Big)\\
&=-s\beta_t\nabla_x B(x,t) - \lambda_t^{-1}\nabla_x\Phi^{f}_{t\to t_f}(x)^\top \frac1s G_t^{-1}(y-y^+(y)).
\end{aligned}
\label{eq:control_approx}
\end{equation}

If we adopt the first-order approximation $\Psi(z)\approx\Psi(y)+\nabla\Psi(y)\cdot(z-y)$ and assume $G_t\approx I$, then the optimal control can be approximated by
\begin{equation}
u^\star(x,t) \approx -s\beta_t\nabla_x B(x,t) - \lambda_t^{-1}\nabla_x\Psi(y).
\label{eq:control_approx_gd}
\end{equation}

\citet{gao2026terminally} deal with a similar problem without considering path-safety. The introduction of the barrier function term turns an unforced HJB equation into a forced one, which enhance the difficulty to obtain the approximation. We summarize the procedure of AegisFlow in Algorithm~\ref{alg:aegisflow}. 

\begin{remark} 
In Algorithm~\ref{alg:aegisflow}, we implement a heuristic substitution that replaces the nominal flow $\Phi^f$ in \eqref{eq:control_approx_gd} with the iteratively updated controlled flow $\Phi^{f+u^k}$. This modification ensures that the terminal loss landscape is evaluated exactly along the controlled trajectory. As $k$ increases and $u^k$ converges toward the optimal control, exploiting $\Phi^{f+u^k}$ substantially reduces the compounding linearization error that would otherwise arise from tracking solely along the unperturbed flow $\Phi^f$.
\end{remark}

\section{Experiments}

\subsection{Text-Guided Image Manipulation with Identity Preservation}

\paragraph{Dataset and Evaluation.} We conduct experiments on the celebrity face dataset CelebA-HQ~\citep{karras2017progressive} with text guidance \texttt{\{old, smile, sad, angry, curly hair\}}. Following the same setup as in~\citep{liu2023flowgrad}, the alignment between the generated image and text prompt is evaluated by the CLIP model~\citep{radford2021learning}. As in~\citep{kim2022diffusionclip}, we measure the perceptual similarity between the source and generated image by LPIPS~\citep{zhang2018unreasonable} and identity similarity by ID, which is computed using ArcFace~\citep{deng2019arcface}. Implementation details are deferred to Appendix~\ref{app:implementation_image}.

\begin{table}
  \caption{\textbf{Comparison of AegisFlow with state-of-the-art methods on CelebA-HQ.} The best results are in \textbf{bold}. We also include the results of directly applying MNC for reference.}
  \label{main-results}
  \centering
  \begin{tabular}{lllll}
    \toprule
    Method     & CLIP $\uparrow$ & LPIPS $\downarrow$ & ID $\uparrow$  \\
    \midrule
    CG + RF~\citep{dhariwal2021diffusion} & 0.292 & 0.346 & 0.643     \\
    CG + LDM~\citep{dhariwal2021diffusion} & 0.298 & 0.383 & 0.513      \\
    DiffusionCLIP~\citep{kim2022diffusionclip} & 0.285 & 0.398 & 0.659  \\
    StyleCLIP + e4e~\citep{patashnik2021styleclip} & 0.267 & 0.359 & 0.704 \\
    FlowGrad + RF~\citep{liu2023flowgrad} & 0.299 & 0.302 & 0.737 \\
    OC-Flow~\citep{wang2024training} & 0.302 & 0.207 & 0.732 \\
    \midrule
    OC-Flow~\citep{wang2024training} + MNC & 0.304 & 0.193 & \textbf{0.920} \\
    \textbf{AegisFlow (Ours)} & \textbf{0.309} & \textbf{0.188} & 0.901 \\
    \bottomrule
  \end{tabular}
\end{table}

\paragraph{Main Results.} We benchmark AegisFlow against state-of-the-art methods on CelebA-HQ (Figure~\ref{fig:celeba_hq}). The results are summarized in Table~\ref{main-results} and Figure~\ref{fig:h_value}. Our method achieves the best CLIP and LPIPS score, indicating that it improves text alignment while preserving subject identity better than prior methods. 
We also compare against the myopic safety baseline, i.e. applying the plug-and-play MNC to OC-Flow. While MNC attains the highest ID, it underperforms AegisFlow on both CLIP and LPIPS. 

\begin{figure}[t]
  \centering
  \includegraphics[width=0.9\linewidth]{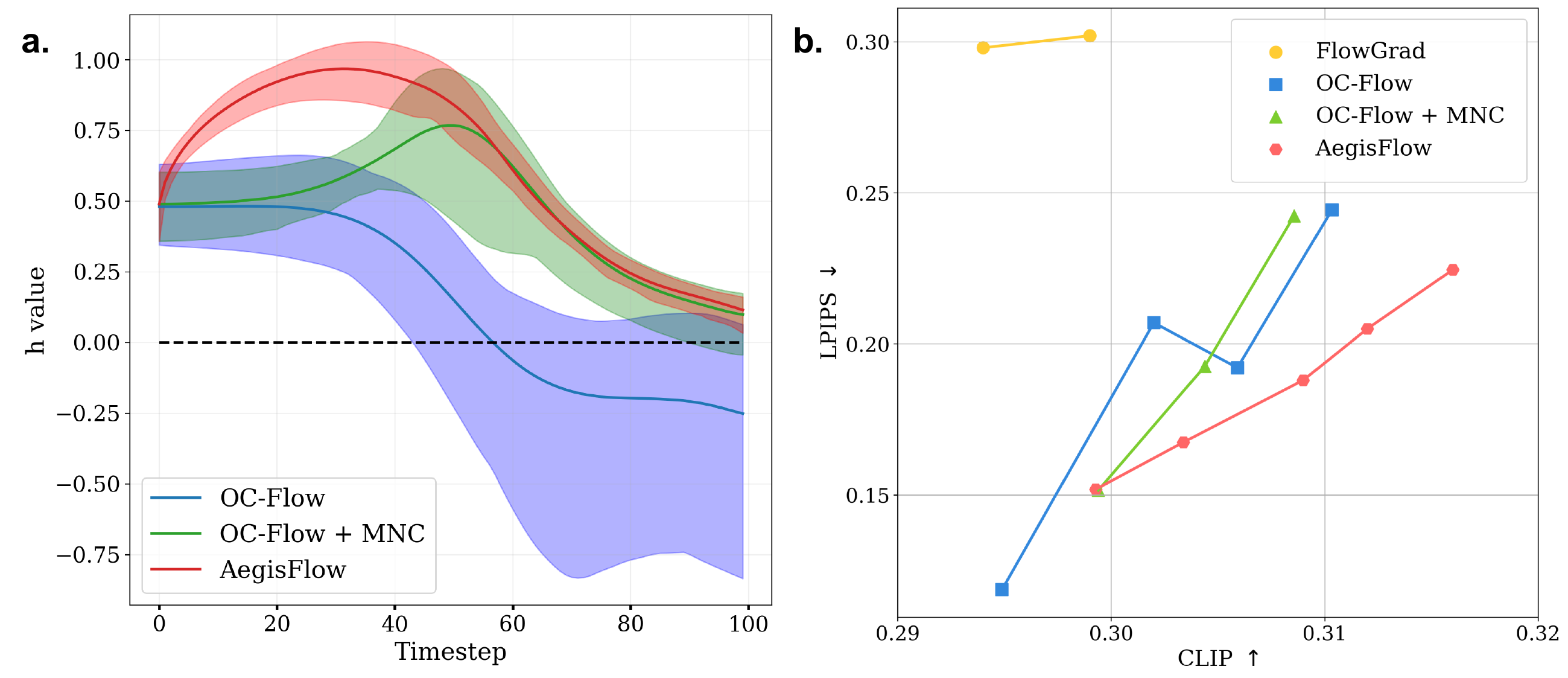}
  \caption{\textbf{Quantitative Analysis on CelebA-HQ.} \textbf{a.} The value of the function $h(x,t)$ at each timestep. The solid lines represent the mean value for the three methods and the shallow bands represent the range of variation. The x-axis is emphasized by a dash line since the forward invariance of $\mathcal{C}_t$ requires $h(x,t) \geq 0$ for all $t$. \textbf{b.} AegisFlow achieves a more favorable balance between text alignment (CLIP) and structure preservation (LPIPS). Connected markers represents different hyperparameter settings (See Appendix~\ref{app:ablation_celeba}).}\label{fig:h_value}
\end{figure}

To better understand AegisFlow, we record the value of the function $h(x,t)$ at each timestep, as shown in Figure~\ref{fig:h_value}. It can be seen that AegisFlow acts the most proactively to maintain a large margin from the boundary of the safe set. Although MNC and AegisFlow both renders safety for the most part of the trajectory, AegisFlow achieves better performance by maintaining a larger safety margin and thus allowing more aggressive optimization towards the target prompt. Comparisons on computational efficiency are deferred to Appendix~\ref{app:computation_efficiency}.

\subsection{Maze Navigation with Safety Constraints}

\begin{figure}[t]
  \centering
  \includegraphics[width=0.9\linewidth]{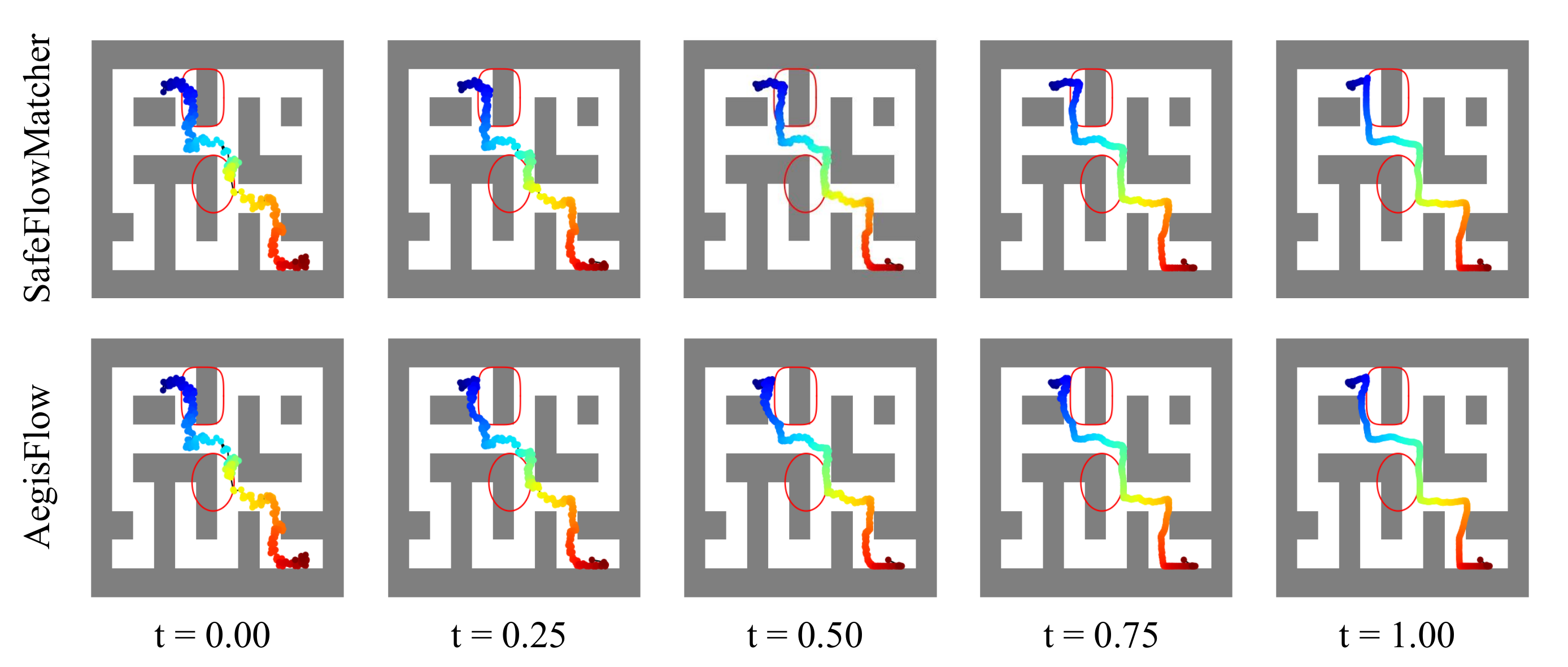}
  \caption{\textbf{Visualization of path generation process in Maze2D.} Red circles represent the obstacles. AegisFlow behaves more proactively to maintain a large margin from the boundary of the safe set in the early stages of generation.}\label{fig:maze}
\end{figure}

\paragraph{Dataset and Evaluation.} We evaluate AegisFlow on a 2D maze navigation task with safety constraints. The experiment is conducted on \texttt{maze\_large\_v1} of the Maze2D environment from D4RL~\citep{fu2020d4rl}. Following \citet{xiao2023safediffuser,yang2025safeflowmatcher}, we introduce two obstacle-based safety constraints given by the following functions:
\[\textbf{BS1}:(\frac{x-x_0}{a})^2 + (\frac{y-y_0}{b})^2 \ge 1,\quad\textbf{BS2}:(\frac{x-x_0}{a})^4 + (\frac{y-y_0}{b})^4 \ge 1.\]
The minimum value of the above functions along the planned trajectory is recorded to evaluate the safety performance. We also report the D4RL normalized score, the average curvature $\kappa$ and the average acceleration $a$ of the trajectory. Implementation details are deferred to Appendix~\ref{app:implementation_maze}.

\begin{table}[t]
  \caption{\textbf{Comparison of AegisFlow with state-of-the-art methods on Maze2D.} The best results are in \textbf{bold}. BS1 and BS2 being positive ($\ge0$) indicates the trajectory is safe with respect to the corresponding constraint. $a\pm b$ means $a$ is the mean and $b$ is the standard deviation over 100 trials.}
  \label{maze-results}
  \centering
  \begin{tabular}{llllll}
    \toprule
    Method     & BS1 $\uparrow$ & BS2 $\uparrow$ & Score $\uparrow$ & $\kappa$ $\downarrow$ & $a$ $\downarrow$ \\
    \midrule
    Diffuser~\citep{janner2022planning} & -0.825 & -0.784 & 1.572$\pm$0.288 & 77.04$\pm$4.30 & 86.68$\pm$3.81 \\
    FlowMatcher & -0.324 & -0.904 & 1.632$\pm$0.003 & 73.51$\pm$1.02 & 88.45$\pm$0.60 \\
    \midrule
    ROS-SafeDiffuser~\citep{xiao2023safediffuser} & 0.010 & 0.010 & 1.435$\pm$0.502 & 75.15$\pm$6.67 & 422.87$\pm$86.70 \\
    RES-SafeDiffuser~\citep{xiao2023safediffuser} & 0.010 & 0.010 & 1.442$\pm$0.451 & 80.30$\pm$13.06 & 398.17$\pm$1060.86 \\
    TVS-SafeDiffuser~\citep{xiao2023safediffuser} & -0.003 & -0.003 & 1.506$\pm$0.405 & 78.72$\pm$7.80 & 124.51$\pm$34.22 \\
    SafeFlowMatcher~\citep{yang2025safeflowmatcher} & 0.020 & 0.041 & 1.632$\pm$0.003 & 51.97$\pm$2.55 & 65.71$\pm$2.45 \\
    \midrule
    \textbf{AegisFlow (Ours)} & \textbf{0.049} & \textbf{0.071} & \textbf{1.635$\pm$0.003} & \textbf{44.72$\pm$2.10} & \textbf{50.29$\pm$6.17} \\
    \bottomrule
  \end{tabular}
\end{table}

\paragraph{Main Results.} The results are summarized in Table~\ref{maze-results}. AegisFlow achieves the strongest safety margins (highest BS1/BS2) and the best generation quality among all methods, indicating safer and smoother plans. Compared with SafeFlowMatcher (which adopts MNC), AegisFlow improves both safety metrics and reduces trajectory curvature and acceleration. Figure~\ref{fig:maze} shows that AegisFlow maintains a larger safety buffer early in the rollout, which aligns with its non-myopic formulation. Comparisons on computational efficiency are deferred to Appendix~\ref{app:computation_efficiency}.

\subsection{Robotic Manipulation with Safety Constraints}

\paragraph{Dataset and Evaluation.} We implement AegisFlow on a robotic manipulation task with novel test-time constraints. The experiment is conducted on the \texttt{Avoiding} environment from D3IL~\citep{jia2024towards}. The task is to control a robot manipulator to reach the target region while avoiding collision with obstacles. We also introduce the same novel test-time constraints as in \citet{romer2024diffusion}. We report the success rate of reaching the target (goal), of reaching the target while satisfying the safety constraints (safe), the number of timesteps the manipulator takes to reach the target (timesteps), and the number of timesteps the manipulator violates the safety constraints (violation). Implementation details are deferred to Appendix~\ref{app:implementation_robot}.

\begin{figure}
  \centering
  \includegraphics[width=0.75\linewidth]{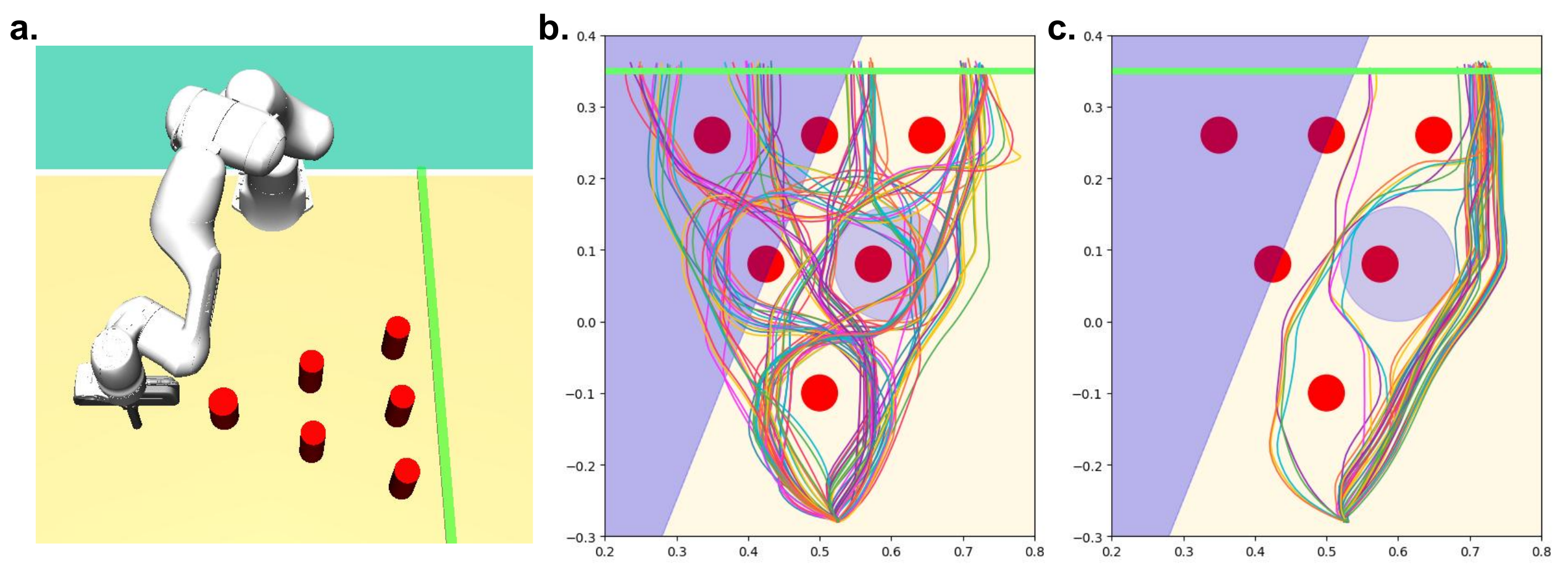}
  \caption{\textbf{Qualitative Results for Robotic Manipulation.} \textbf{a.} Visualization of the robotic manipulation task. The object is to reach the green line while avoiding collision with the red obstacles. \textbf{b.} Generated trajectories by Flowmatcher. \textbf{c.} Generated trajectories by AegisFlow. The novel test-time constraints are that the robot should not enter the blue regions.}\label{fig:avoiding}
\end{figure}

\begin{table}[!]
  \caption{\textbf{Comparison of AegisFlow with state-of-the-art methods on Avoiding.} The best results are in \textbf{bold}. $a\pm b$ means $a$ is the mean and $b$ is the standard deviation over 100 trials.}
  \label{avoiding-results}
  \centering
  \begin{tabular}{lllll}
    \toprule
    Method     & Timesteps $\downarrow$ & Goal $\uparrow$ & Safe $\uparrow$ & Violation $\downarrow$ \\
    \midrule
    Diffuser~\citep{janner2022planning} & 76.7$\pm$12.7 & \textbf{1.00} & 0.07 & 17.8$\pm$12.1 \\
    Flowmatcher & 64.2$\pm$9.4 & \textbf{1.00} & 0.22 & 14.8$\pm$11.5 \\
    \midrule
    DOM~\citep{giannone2023aligning} & 79.1$\pm$13.9 & \textbf{1.00} & 0.96 & 0.1$\pm$0.5 \\
    Model-free~\citep{romer2024safe} & 76.1$\pm$12.1 & 0.99 & 0.07& 18.0$\pm$12.0 \\
    DPCC-C~\citep{romer2024diffusion} & 69.0$\pm$12.9 & \textbf{1.00} & \textbf{0.98} & \textbf{0.0$\pm$0.3} \\
    \midrule
    \textbf{AegisFlow (Ours)} & \textbf{62.6$\pm$2.0} & \textbf{1.00} & \textbf{0.98} & 0.1$\pm$0.5\\
    \bottomrule
  \end{tabular}
\end{table}

\paragraph{Main Results.} Table~\ref{avoiding-results} shows that AegisFlow maintains the best goal success and safety rate while achieving the fewest timesteps. Compared with DPCC-C, AegisFlow maintains the same safety level with faster task completion. Relative to unsafe baselines, AegisFlow substantially reduces violations while preserving goal success. Figure~\ref{fig:avoiding} visualizes that AegisFlow respects the novel constraints by steering trajectories away from forbidden regions.

\section{Conclusion}

In this paper, we presented AegisFlow, a training-free guidance framework that addresses the limitations of myopic controllers in path-safe guided flow matching. By casting guided generation as a safety-critical optimal control problem, AegisFlow integrates constraints into the cost functional. A tractable HJB approximation is derived to obtain the optimal control. Comprehensive empirical evaluations across diverse domains demonstrate the effectiveness of AegisFlow. Future works may improve HJB approximation, enhance computational efficiency, and extend this control-theoretic framework to broader settings and more complex modalities.

\section{Acknowledgements}

This work was supported by National Natural Science Foundation of China (62325305), and the BNRist project (No. BNR2024TD03003).

\bibliographystyle{plainnat}
\bibliography{nips_2026}

\newpage
\appendix

\section{Background on Control Barrier Functions}\label{app:cbf}

\subsection{Control Barrier Functions} 

Control Barrier Functions (CBFs)~\citep{ames2016control,ames2019control} are a powerful tool to enforce hard safety constraints in control theory. Specifically, given a control system
\begin{equation}\label{eq:control_system_cbf}
\dot{x} = f(x) + g(x)u,
\end{equation}
we want to ensure that the state $x(t)$ remains within a safe set
\begin{equation}\label{eq:safe_set}
\mathcal{C}:=\{x\in\mathbb{R}^n: h(x)\geq0\}
\end{equation}
for all $t\in[t_0, t_f]$. If so, we say that the set $\mathcal{C}$ is \emph{forward invariant}. We also allow $\mathcal{C}$ to be time-varying, i.e. $\mathcal{C}_t:=\{x\in\mathbb{R}^n: h(x,t)\geq0\}$, and in this case we say $\mathcal{C}_t$ is \emph{forward invariant} if $x(t)\in \mathcal{C}_t$ for all $t\in[t_0, t_f]$. The intuition of CBFs is to ensure that when evaluated along the trajectories of the system, the gradient of $h$ is nonnegative whenever $h(x)=0$, which prevents the system from leaving the safe set. We shall introduce the formal definition of CBFs as follows.

\begin{definition}[Extended Class $\mathcal{K}_\infty$ Function]
A function $\alpha:\mathbb{R}\to\mathbb{R}$ is an extended class $\mathcal{K}_\infty$ function if it is strictly increasing and with $\alpha(0)=0$.
\end{definition}

\begin{definition}[Control Barrier Function]
Let $\mathcal{C}\subset \mathcal{D}\subset\mathbb{R}^n$ be with the form of \eqref{eq:safe_set} and $h:\mathcal{D}\to\mathbb{R}$ be a continuously differentiable function, then $h$ is a control barrier function (CBF) if there exists an extended class $\mathcal{K}_\infty$ function $\alpha$ such that for the control system \eqref{eq:control_system_cbf}, we have
\begin{equation}\label{eq:cbf_condition}
\sup_{u\in U}[\nabla h(x)\cdot(f(x)+g(x)u)+\alpha(h(x))]\geq0
\end{equation}
for all $x\in \mathcal{D}$. For a CBF $h$, define the set of safe controls at $x$ as
\[K_{\text{CBF}}(x) := \{u\in U : \nabla h(x)\cdot(f(x)+g(x)u)+\alpha(h(x))\geq0\}.\]
\end{definition}

\begin{theorem}[\citet{ames2016control}]\label{thm:cbf_invariance}
Let $\mathcal{C}\subset \mathcal{D}\subset\mathbb{R}^n$ be with the form of \eqref{eq:safe_set} and $h:\mathcal{D}\to\mathbb{R}$ be a continuously differentiable function. If $h$ is a CBF and $\nabla h(x)\neq0, \forall x\in\partial \mathcal{C}$, then any Lipschitz continuous controller $u(x)\in K_{\text{CBF}}(x)$ renders the set $\mathcal{C}$ forward invariant. Moreover, $\mathcal{C}$ is asymptotically stable, i.e. if $h(x(t))<0$ for some $t$, then $h(x(t))\to0$ as $t\to\infty$.
\end{theorem}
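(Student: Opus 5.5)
Both claims follow from a single function. I would study $\phi(t):=h(x(t))$ along a solution of the closed-loop system $\dot x=f(x)+g(x)u(x)$, with $u$ a Lipschitz controller satisfying $u(x)\in K_{\text{CBF}}(x)$. Since $f$, $g$ and $u$ are Lipschitz, the closed-loop vector field is locally Lipschitz, so for each initial condition there is a unique $C^1$ solution on its maximal interval of existence. Because $h$ is $C^1$, $\phi$ is $C^1$. The definition of $K_{\text{CBF}}$ gives the scalar differential inequality
\[\dot\phi(t)=\nabla h(x(t))\cdot\bigl(f(x(t))+g(x(t))u(x(t))\bigr)\ge -\alpha(\phi(t)).\]
The plan is to compare $\phi$ with solutions of the scalar ODE $\dot y=-\alpha(y)$, which has $y\equiv 0$ as an equilibrium because $\alpha(0)=0$.

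For forward invariance, assume $h(x(t_0))\ge 0$ and suppose, for contradiction, that $\phi(t_1)<0$ for some $t_1>t_0$. Let $t^*:=\sup\{t\le t_1:\phi(t)\ge0\}$. Then $\phi(t^*)=0$ and $\phi<0$ on $(t^*,t_1]$. On this interval $\alpha(\phi)<0$ because $\alpha$ is strictly increasing, so $\dot\phi\ge-\alpha(\phi)>0$. Hence $\phi$ is strictly increasing on $(t^*,t_1]$, and therefore $\phi(t_1)>\phi(t^*)=0$, which is a contradiction. In this direction no comparison lemma and no Lipschitz property of $\alpha$ is needed.

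The hypothesis $\nabla h\ne0$ on $\partial\mathcal C$ is where I expect the main subtlety. In the argument above it is not actually used. It matters only for the standard (Nagumo-type) formulation, which requires that $\partial\mathcal C=\{h=0\}$ and that $\mathcal C$ be the closure of its interior. I would note this and keep the monotonicity argument as the main proof.

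For asymptotic stability, suppose $\phi(\bar t)<0$. Wherever $\phi<0$ we have $\dot\phi\ge-\alpha(\phi)>0$, so $\phi$ is increasing there, and by the first part, once $\phi$ reaches $0$ it stays nonnegative. So either $\phi$ hits $0$ in finite time, or it stays negative, is increasing, and converges to some limit $L\le0$. In the second case, if $L<0$, then for all large $t$ we have $\dot\phi\ge-\alpha(\phi)\ge-\alpha(L)>0$, using that $\phi\le L$ along an increasing sequence bounded by $L$. This forces $\phi\to\infty$, a contradiction, so $L=0$.

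The genuine obstacle is that this needs the solution to exist for all $t\ge \bar t$. I would either assume forward completeness or argue it from boundedness of trajectories. Failing that, I would state the conclusion on the maximal interval of existence.
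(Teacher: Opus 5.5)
Your proof is correct. The paper gives no proof of this theorem; it is quoted from Ames et al. So the relevant comparison is with the standard proofs there, and your route differs from them.

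Those proofs go one of two ways:
\begin{itemize}
\item \emph{Comparison lemma.} From $\dot h\ge-\alpha(h)$ and local Lipschitzness of $\alpha$ one gets $h(x(t))\ge\sigma(t)$, where $\sigma$ solves $\dot\sigma=-\alpha(\sigma)$ with $\sigma(t_0)=h(x(t_0))$.
\item \emph{Nagumo's theorem.} This uses only $\dot h\ge0$ on $\{h=0\}$. It needs $\nabla h\neq0$ on $\partial\mathcal C$ so that the tangent cone is the half-space $\{v:\nabla h(x)\cdot v\ge0\}$. Attractivity then comes from the Lyapunov-like function $V_{\mathcal C}=\max\{-h,0\}$.
\end{itemize}

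Your exit-time monotonicity argument instead uses the CBF inequality on $\mathcal D\setminus\mathcal C$. That is why the regularity hypothesis really does drop out. It presupposes, as Ames et al. do, that $\mathcal D$ is an open neighbourhood of $\mathcal C$, so the inequality is available just outside $\partial\mathcal C$. Your argument also uses only $\alpha(r)<0$ for $r<0$. It therefore works even with the paper's definition of extended class $\mathcal K_\infty$, which does not require continuity, let alone local Lipschitzness. What you give up is quantitative information: the comparison route yields the explicit lower bound $\sigma$ and hence a rate, while yours yields convergence only. Your second part is essentially the $V_{\mathcal C}$ argument in elementary form. It also shows that $\max\{-h(x(t)),0\}$ is nonincreasing, which is the stability half of asymptotic stability.

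One thing you should state explicitly. What you prove is $\max\{-h(x(t)),0\}\to0$, not the literal $h(x(t))\to0$, and the literal claim is false in your ``hits $0$ in finite time'' branch. Take $\dot x=u$ with $h(x)=x$, $\alpha(r)=r$, $\mathcal D=(-1,\infty)$, and the Lipschitz controller $u\equiv1$. It lies in $K_{\text{CBF}}(x)$ since $1+x\ge0$ on $\mathcal D$. Starting from $x(t_0)=-\tfrac12$, it gives $h(x(t))\to\infty$. So phrase your conclusion as convergence of the negative part of $h$ to zero, which is the intended meaning of asymptotic stability of $\mathcal C$. Your caveats about local Lipschitzness of $f,g$ and forward completeness (the solution existing and staying in $\mathcal D$ for all $t\ge\bar t$) are the right ones to add.
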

The result above can be easily generalized to time-varying safe sets by introducing an auxiliary state to represent time.

\subsection{Min-norm Controller}

Min-norm controller~\citep{freeman2008robust,ames2014rapidly} is a widely used method to render forward invariance in control theory. Given a dynamical system and a nominal control $u_{\text{nom}}$, by imposing an additive safe control $u_{\text{safe}}$, we seek to ensure that $\mathcal{C}_t$ under
\[\dot{x} = f(x,t)+u_{\text{nom}}(t)+u_{\text{safe}}(t)\]
is forward invariant.

It is desired that $u_{\text{safe}}$ can be as small as possible in order to preserve the editability of the original nominal control, which naturally leads to the following optimization problem at each timestep:
\begin{equation}\label{eq:cbf_qp_slack}
\begin{aligned}\min_{u_{\text{safe}}(t),\xi} &\quad \frac12\|u_{\text{safe}}\|_2^2+\rho_t\xi\\
\text{s.t.} &\quad a_t\cdot u_{\text{safe}}(t)+b_t+\xi\geq0, \quad \xi\geq0.
\end{aligned}
\end{equation}
where we introduce the slack variable $\xi$ to relax the CBF constraint for the sake of stability and robustness, and we denote for brevity
\[a_t:=\nabla_x h(x,t), \quad b_t:=\nabla_x h(x,t)\cdot(f(x,t)+u_{\text{nom}}(t))+\partial_t h(x,t)+\alpha(h(x,t)).\]

Min-norm controller imposes the safe control $u_{\text{safe}}(t)$ by solving the optimization problem \eqref{eq:cbf_qp_slack}.

\section{Proof of the Theoretical Results}

\subsection{Proof of Proposition~\ref{prop:hjb_aegis}\label{app:proof_hjb_aegis}}

\begin{proof}
Substitute the cost functional \eqref{eq:cost_functional_aegis} and the dynamics \eqref{eq:controlled_dynamics} into the HJB equation \eqref{eq:hjb}, we have
\[-\partial_t V = \min_{u \in U} \left\{\frac{\lambda_t}2\|u\|_2^2 + \beta_t B + \nabla V \cdot (f+u) \right\}, \quad V(t_f,x)=\Psi(x).\]

The term inside the minimization is a quadratic function of $u$, which allows us to derive the explicit form of the value function and the optimal control.
\[-\partial_t V = \nabla_x V\cdot f+\beta_t B-\frac{1}{2\lambda_t}\|\nabla_x V\|_2^2,\]
\[u^\star(x,t)=-\lambda_t^{-1} \nabla_x V(x,t),\]
which completes the proof.
\end{proof}

\subsection{Proof of Proposition~\ref{prop:forward_invariance}}\label{app:proof_forward_invariance}

\begin{proof}
Suppose $\mathcal{C}_t$ is not forward invariant. Let $\bar{t}:=\inf\{t\in[t_0,t_f]:x(t)\notin \mathcal{C}_t\}$ be the first time the trajectory leaves the safe set. Since $x(t_0)\in\operatorname{Int}(C_{t_0})$, we have $\bar{t}>t_0$ and $h(x(\bar{t}),\bar{t})=0$. Taking derivative of $V$ along the trajectory gives
\[\frac{\mathrm{d}}{\mathrm{d}t}V(x(t),t)=\nabla_x V\cdot (f+u^\star)+\partial_t V\le-\beta_t B(x(t),t),\]
where the inequality follows from \eqref{eq:hjb_forced} and \eqref{eq:feedback_forced}. Since $\beta_t\ge0$ and $B(x(t),t)\ge0$ for all $t\in[t_0,\bar{t})$, we have $\frac{\mathrm{d}}{\mathrm{d}t}V(x(t),t)\le0$. 
On the other hand, since $\mathcal C_t$ varies not too rapidly and $B(x,t)$ blows up fast enough as $x\to\partial \mathcal{C}_t$, we have $V(x(t),t)\to\infty$ as $t\to\bar{t}$. This contradicts the fact that $V$ is non-increasing along the trajectory. 
\end{proof}
\begin{remark}
For example, suppose $B(x,t)\ge\frac{c}{h(x,t)^2}$ for some $c>0$ and $d(t):=\mathrm{dist}(x(t),\partial \mathcal{C}_t)$ is $L$-Lipschitz. Suppose $\int_{t_0}^{\bar{t}}B(x(t),t)\mathrm{d}t<\infty$, then it follows that $d^{-1}\in L^2([t_0,\bar{t}])$. On the other hand, $\dot x=f+u^\star\in L^2$. Thus, $x$ is absolutely continuous and hence $d$ is absolutely continuous. Moreover, $|\dot d|\le\|\dot x\|+L\in L^2$. By the Cauchy-Schwarz inequality, it follows that $|\dot d|/d\in L^1([t_0,\bar{t}])$, which implies that $\log d$ is absolutely continuous. However, since $d(t)\to0$ as $t\to\bar{t}$, it follows that $\log d(t)\to-\infty$ as $t\to\bar{t}$, absurd.
\end{remark}

\subsection{Proof of Proposition~\ref{prop:forced_hj_terminal}}\label{app:proof_forced_hj_terminal}

\begin{proof}
Eq.~\eqref{eq:grad_pullback} can be derived by the chain rule and the definition of the flow map. Then it follows that
\[\|\nabla_x V(x,t)\|^2_2 = \nabla_{y}\widehat{V}(y,s)^\top G_t(y)\nabla_{y}\widehat{V}(y,s).\]
Notice that $\Phi^f_{t\to t_f}(x(t))=x(t_f)=\text{const.}$, taking the time derivative on both sides gives
\[\nabla_x\Phi^f_{t\to t_f}(x(t))\cdot f + \partial_t \Phi^f_{t\to t_f}(x(t))=0.\]

Taking the time derivative on both sides of $\widehat{V}(\Phi^f_{t\to t_f}(x),s(t))=V(x,t)$, we have
\begin{align*}
\partial_t V(x,t) &= \partial_s \widehat{V}(y,s(t))\cdot s'(t) + \nabla_y \widehat{V}(y,s(t))\cdot \partial_t \Phi^f_{t\to t_f}(x)\\
&= -\lambda_t^{-1}\partial_s \widehat{V}(y,s(t)) - \nabla_y \widehat{V}(y,s(t))\cdot \nabla_x\Phi^f_{t\to t_f}(x)\cdot f.
\end{align*}
Substituting the above identities into \eqref{eq:hjb_forced} gives \eqref{eq:forced_hj}, which completes the proof.
\end{proof}

\section{Implementation Details}\label{app:implementation}

\subsection{Text-Guided Image Manipulation with Identity Preservation}\label{app:implementation_image}

Following~\citep{liu2023flowgrad, wang2024training}, we adopt the pretrained Rectified Flow (RF)~\citep{liu2022flow} as the generative prior. To ensure identity preservation, we define the safe set as $\mathcal{C}_t:=\{x:h(x,t)\ge0\}$, with
\[h(x,t):=\epsilon_t-[\operatorname{dist}(x,x_{\text{src}})]^2,\]
where $\operatorname{dist}(\cdot,\cdot)$ is the ArcFace distance, $x_{\text{src}}$ is the source image, and $\epsilon_t$ is a time-varying margin. Based on the dataset statistics, we set
\[\epsilon_t:=\epsilon_1+\frac{\epsilon_0-\epsilon_1}{1+\exp(\kappa_\epsilon(t-\tau_0))},\]
with $\epsilon_0=2.5, \epsilon_1=0.2, \kappa_\epsilon=10, \tau_0=0.55$.
Note that $\epsilon_0$ is set to be larger than the maximum ArcFace distance between any image and its inversion to ensure the condition of Proposition~\ref{prop:forward_invariance} is satisfied, i.e. $x_0 \in \operatorname{Int}(\mathcal{C}_0)$. The barrier function is set to be
\[B(x,t) := \frac{\sigma\left(-\kappa_B h(x,t)\right)}{h(x,t)},
\]
where $\sigma(\cdot)$ is the sigmoid function, $\kappa_B=5$, and $\delta=0.15$. With such a barrier function, Proposition~\ref{prop:forward_invariance} guarantees the ArcFace similarity between the generated image and the source image would be larger than $1-\frac{\epsilon_{t_f}+\delta}{2}\approx0.825$, which is aligned with the empirical results in Table~\ref{main-results}.

The terminal loss is defined as $\Psi(x):=-0.7\cdot\operatorname{CLIP}(x, c)$, where $c$ is the target text prompt. The schedules are set to be $\lambda_t\equiv0.6$ and $\beta_t\equiv0.1$. Following \citet{wang2024training}, we set $M=15$, $\eta=2.5$ and $\nu=0.995$.

For the min-norm controller, we adopt the same safe set with $\rho_t=1$ for all $t$. The $\mathcal{K}_\infty$ function $\alpha$ is set to be $\alpha(r)=\gamma \cdot r$ with $\gamma=0.5$. The experiments are conducted on 4 NVIDIA GeForce RTX 4090 GPUs with 24GB memory.

\subsection{Maze Navigation with Safety Constraints}\label{app:implementation_maze}

We train a flow matching model to be the generative prior. The training strategy is summarized in Table~\ref{tab:train_maze}. The barrier function is set to be $B(\tau,t):=\sum_{x\in\tau}\big[(h_1(x)-t+\delta)^{-1}+2(h_2(x)-t+\delta)^{-1}\big]$, where $h_1$ and $h_2$ are the two functions defined in \textbf{BS1} and \textbf{BS2}, respectively, and $\delta=0.05$. Note that the term corresponding to \textbf{BS2} is weighted more heavily because it is harder to satisfy~\citep{xiao2023safediffuser}. Since \texttt{score} is computed via interaction with the environment and thus is non-differentiable, we set the terminal loss to be a linear combination of the curvature and acceleration, i.e. $\Psi(\tau):=0.01(\lambda_\kappa \kappa(\tau)+\lambda_a a(\tau))$, where $\tau$ is the planned trajectory, $\lambda_\kappa=0.6$ and $\lambda_a=0.4$. The schedule is set to be $\lambda_t\equiv2$, $\beta_t\equiv0.01$, $M=15$, $\eta=0.05$ and $\nu=0.995$. The experiments are conducted on 4 NVIDIA GeForce RTX 4090 GPUs with 24GB memory.

\begin{table}[h]
    \caption{\textbf{Training strategy for the flow matching model in maze navigation.}}
    \label{tab:train_maze}
    \centering
    \begin{tabular}{ll}
    \toprule
        U-Net channels & $(1, 4, 8)\times32$ \\
        Loss type & \texttt{L2}\\
        Training steps & 1M \\
        Steps per epoch & 5K \\
        Batch size & 64 \\
        Learning rate & 2e-4 \\
        EMA decay & 0.995 \\
        Diffusion steps & 256 \\
        Planning horizon & 384 \\
    \bottomrule
    \end{tabular}
\end{table}

\subsection{Robotic Manipulation with Safety Constraints}\label{app:implementation_robot}

We train a flow matching model to be the generative prior. The barrier function is set to be $B(\tau,t):=\sum_{x\in\tau}\big[(h_1(x)-t+\delta)^{-1}+(h_2(x)-t+\delta)^{-1}\big]$, where $h_1$ and $h_2$ are the two functions corresponding to the halfspace and circular constraints, respectively, and $\delta=0.05$. Since \texttt{timesteps} is computed via interation with the environment and thus is non-differentiable, we set the terminal loss to be the curvature of the path, i.e. $\Psi(\tau):=0.1\kappa(\tau)$, where $\tau$ is the planned trajectory. The schedule is set to be $\lambda_t\equiv1.5$, $\beta_t\equiv0.5$, $M=7$, $\eta=0.2$ and $\nu=0.995$. The experiments are conducted on 4 NVIDIA GeForce RTX 4090 GPUs with 24GB memory.

\section{Additional Experimental Results}

\subsection{Computational Efficiency Comparisons}\label{app:computation_efficiency}

We compare the computational efficiency of AegisFlow with the baselines. The results are shown in Table~\ref{tab:computation_efficiency}. For the image manipulation task, AegisFlow takes slightly more time than OC-Flow, and is more efficient than OC-Flow + MNC. For the maze navigation task, AegisFlow is slower than SafeDiffuser and SafeFlowmatcher, which is expected since AegisFlow attempts to estimate the solution to the forced HJB equation, while SafeDiffuser and SafeFlowmatcher only solve a simple quadratic program at each timestep in one go. Indeed, they mainly consider constraint satisfaction in a myopic manner, while AegisFlow seeks to find a globally optimal solution that balances constraint satisfaction and task performance. 

The experiment is conducted on 2 NVIDIA RTX 6000D GPUs with 84GB memory.

\begin{table}
  \caption{\textbf{Computational efficiency comparisons.} $a\pm b$ means $a$ is the mean and $b$ is the standard deviation over 1000 trials (CelebA-HQ) or 100 trials (Maze2D).}
  \label{tab:computation_efficiency}
  \centering
  \begin{tabular}{lll}
    \toprule
    Dataset & Method & Computation Time (s) \\
    \midrule
    \multirow{3}{*}{CelebA-HQ}
    & OC-Flow & 421.02$\pm$25.56 \\
    & OC-Flow + MNC & 518.26$\pm$25.54 \\
    & AegisFlow & 433.81$\pm$25.57 \\
    \midrule
    \multirow{3}{*}{Maze2D}
    & SafeDiffuser & 4.78$\pm$0.42 \\
    & SafeFlowmatcher & 4.21$\pm$0.39 \\
    & AegisFlow & 44.15$\pm$1.96 \\
    \bottomrule
  \end{tabular}
\end{table}

\subsection{Metrics Comparisons for Text-Guided Image Manipulation}\label{app:ablation_celeba}

The detailed metrics comparisons for text-guided image manipulation on CelebA-HQ are shown in Table~\ref{ablation_celeba}.

\begin{table}
  \caption{\textbf{Detailed metrics comparisons for text-guided image manipulation on CelebA-HQ.}}
  \label{ablation_celeba}
  \centering
  \begin{tabular}{lllll}
    \toprule
    Method & Hyperparameters & CLIP $\uparrow$ & LPIPS $\downarrow$ & ID $\uparrow$  \\
    \midrule
    \multirow{2}{*}{FlowGrad} & + RF & 0.299 & 0.302 & 0.737 \\ & + LDM & 0.294 & 0.298 & 0.743 \\
    \midrule
    \multirow{4}{*}{OC-Flow} & $\alpha=0.6$ & 0.295 & 0.119 & 0.822 \\ & $\alpha=0.7$ & 0.302 & 0.207 & 0.732 \\ & $\alpha=0.8$ & 0.306 & 0.192 & 0.722 \\ & $\alpha=0.9$ & 0.310 & 0.244 & 0.662 \\
    \midrule
    \multirow{3}{*}{OC-Flow + MNC} & $\alpha=0.7$ & 0.299 & 0.152 & 0.937 \\ & $\alpha=0.8$ &0.304 & 0.193 & 0.920 \\ & $\alpha=0.9$ & 0.309 & 0.242 & 0.901 \\
    \midrule
    \multirow{5}{*}{AegisFlow} & $\lambda=1.0$ & 0.299 & 0.152 & 0.903 \\ & $\lambda=0.8$ & 0.303 & 0.167 & 0.901 \\ & $\lambda=0.6$ & 0.309 & 0.188 & 0.901 \\ & $\lambda=0.5$ & 0.312 & 0.205 & 0.902 \\ & $\lambda=0.4$ & 0.316 & 0.224 & 0.903 \\
    \bottomrule
  \end{tabular}
\end{table}

\end{document}